\newcommand{\cmark}{\ding{51}}%
\newcommand{\xmark}{\ding{55}}%
\newtheorem{theorem}{Theorem}
\newtheorem{lemma}{Lemma}
\newtheorem{assumption}{Assumption}
\renewcommand{\algorithmicrequire}{\textbf{Input:}}  
\renewcommand{\algorithmicensure}{\textbf{Output:}}  
\title{A Novel Noise Injection-based Training Scheme for Better Model Robustness}
\author{
Zeliang Zhang\thanks{The first and second authors contributed equally to this work.}\\
University of Rochester\\
\and
Jinyang Jiang\textsuperscript{$\star$}\\
Peking University\\
\and
Minjie Chen\\
Huaru Tech
\and
Zhiyuan Wang\\
HUST\\
\and
Yijie Peng\\
Peking University\\
\and
Zhaogfei Yu\\
Peking University\\
}
\begin{document}

\maketitle

\begin{abstract}
 
Noise injection-based method has been shown to be able to improve the robustness of artificial neural networks in previous work. In this work, we propose a novel noise injection-based training scheme for better model robustness. Specifically, we first develop a likelihood ratio method to estimate the gradient with respect to both synaptic weights and noise levels for stochastic gradient descent training. Then, we design an approximation for the vanilla noise injection-based training method to reduce memory and improve computational efficiency. Next, we apply our proposed scheme to spiking neural networks and evaluate the performance of classification accuracy and robustness on MNIST and Fashion-MNIST datasets. Experiment results show that our proposed method achieves a much better performance on adversarial robustness and slightly better performance on original accuracy, compared with the conventional gradient-based training method.

\end{abstract}

\section{Introduction}

Artificial neural networks (ANNs) have increasingly more successful applications, such as face recognition~\citep{zhao2003face,li2020review,nawaz2020artificial, song2021talking}, voice verification~\citep{jung2019self,faisal2019specaugment,qian2021audio,song2021tacr} and automation vehicles~\citep{spielberg2019neural,sung2021training,safiullin2020method}.  While various studies design algorithms and architectures to improve the prediction accuracy of ANNs, there is much less work focusing on improving robustness~\citep{mangal2019robustness}. 

For a given input $x$,  ANNs should output a robustness prediction for all inputs $\hat{x}$ within a ball of radius $\delta$ centered at $x$. ANNs has exhibited poor robustness, leading to the unfairness of outcomes~\citep{bellamy2018ai,trewin2019considerations,madaio2020co}, leakage of private information~\citep{chase2017private,chang2018privacy,hitaj2017deep}, and susceptibility to input perturbations~\citep{goodfellow2015explain,kurakin2018adversarial,ilyas2019adversarial}. Our work focuses on improving the robustness of ANNs against input  perturbations, especially for adversarial perturbations. Recent studies find that ANNs are extremely vulnerable to adversarial perturbations, the crafted noise added on original examples, which is unperceivable to human eyes but can mislead the ANNs. Adversarial attacks pose a severe threat to the security of ANNs in various applications ~\citep{liu2016delving,song2018physical,xie2020real,zhang2021generating, capito2021modeled}. 

Noise injection-based methods can effectively improve the robustness of ANNs under adversarial attacks. \citet{you2019adversarial} add Gaussian random noises to layers in ANN to improve robustness, which can be viewed as a regularization method to alleviate over-fitting. \citet{Xiao2021NoiseOpt} further optimize noise levels to better defend against input perturbations without loss of classification accuracy. We further explore how to leverage added noises to optimize the parameters of models for better robustness.

The likelihood ratio (LR) method is an unbiased stochastic gradient estimation technique, which has been applied to many simulation optimization problems ~\cite{hong2009estimating,peng2015discontinuity,peng2022new}.  \citet{peng2022new} use the technique to train ANNs and achieve an accuracy performance comparable to the backpropagation (BP) method. Yet, many potentials remain to be discovered, such as improving efficiency and scalability.

We propose a novel noise injection-based training scheme more efficient than the original LR to optimize model parameters
and apply it to train spiking neural networks (SNNs). We evaluate the accuracy and robustness of our proposed method on MNIST and Fashion-MNIST under various types of adversarial attacks. The experiment results show that our method leads to much better robustness against adversarial attacks and slightly better accuracy on original samples.

\section{Related Work}

\textbf{Neural Networks and Training Methods:} 
Many optimization methods, such as backpropagation (BP)~\citep{rumelhart1986learning}, heuristic algorithm~\citep{manoharan2020population} and some bio-inspired approaches~\citep{lee2015difference, ororbia2019spiking,ororbia2019biologically}, have been proposed to train artificial neural networks (ANN).  Among them,  BP is the most popular method  to train conventional ANNs.  And its variant, spatio-temporal BP (STBP) algorithm~\citep{wu2019direct,wu2018spatio}, can also be applied to train spiking neural networks (SNN). For conventional ANNs, BP algorithm computes the gradient of the loss function with respect to weight parameters by the chain rule, which requires the differentiability of loss function and activation function. SNN uses a discontinuous threshold function as the activation function and adopts certain memory mechanism to the feed forward process. STBP algorithm uses approximated derivative of the threshold function and combines the layer-by-layer spatial domain and  the timing-dependent temporal domain for stochastic gradient descent. However, both BP and STBP lead to poor robustness. Our proposed method improves model robustness for conventional ANNs and SNNs.

\textbf{Adversarial Attack:} After \citet{szegedy2016inceptionv3} find the vulnerability of ANNs against adversarial attacks, various methods generating adversarial samples have been proposed, including gradient-based attack~\citep{goodfellow2014explaining,kurakin2018adversarial, madry2018towards}, transfer-based attack~\citep{dong2018boosting, xie2019improving,wei2019transferable}, score-based attack~\citep{Ru2020BayesOpt,meunier2019yet,du2019query}, decision-based attack~\citep{brendel2017decision, li2020qeba,wang2021triangle}. Existing adversarial attacks can be generally  categorized into two types: 1) white-box attack, where attackers have access to the architecture and parameters of the target ANN model; 2) black-box attack, where attackers can only access the output of the target model. Gradient-based attack is one of the most investigated white-box attack method. \citet{goodfellow2014explaining} propose a fast gradient sign method (FGSM) to generate adversarial samples. \citet{kurakin2016adversarial} propose a multi-step FGSM for a basic iterative method (BIM). \citet{madry2018towards} propose a project gradient descent (PGD) attack which applies random initialization to BIM and achieves a remarkable improvement on the success rate of the attack. \citet{dong2018boosting} further use momentum to improve BIM and propose momentum iterative method (MIM) to enhance the performance of gradient-based attack. \citet{li2018nattack} propose an evolution gradient search based block-box adversarial attack against models with defense. Transfer-based attacks do not need to access the model, which makes it popular in real world applications. \citet{xie2019improving} apply diversity image transformations (DIM) on inputs to generate transferable adversarial samples,
and \citet{Dong_2019_CVPR} propose a translation-invariant method (TIM) which employs pre-defined kernels to convolve the gradient for improving transferable adversarial attacks against ANNs with defense mechanism. In our work, we will evaluate the adversarial robustness of models  under these attacks.

\section{Methodology}
We first introduce a general framework shown in Fig.~\ref{framework} for training ANNs and SNNs using the LR method. Then, we propose an approximation method to reduce memory cost and improve computation efficiency, and provide the proof of convergence. In the end, we extend the method to train SNN.

\begin{figure}[h]
\centering
	\includegraphics[scale=0.4]{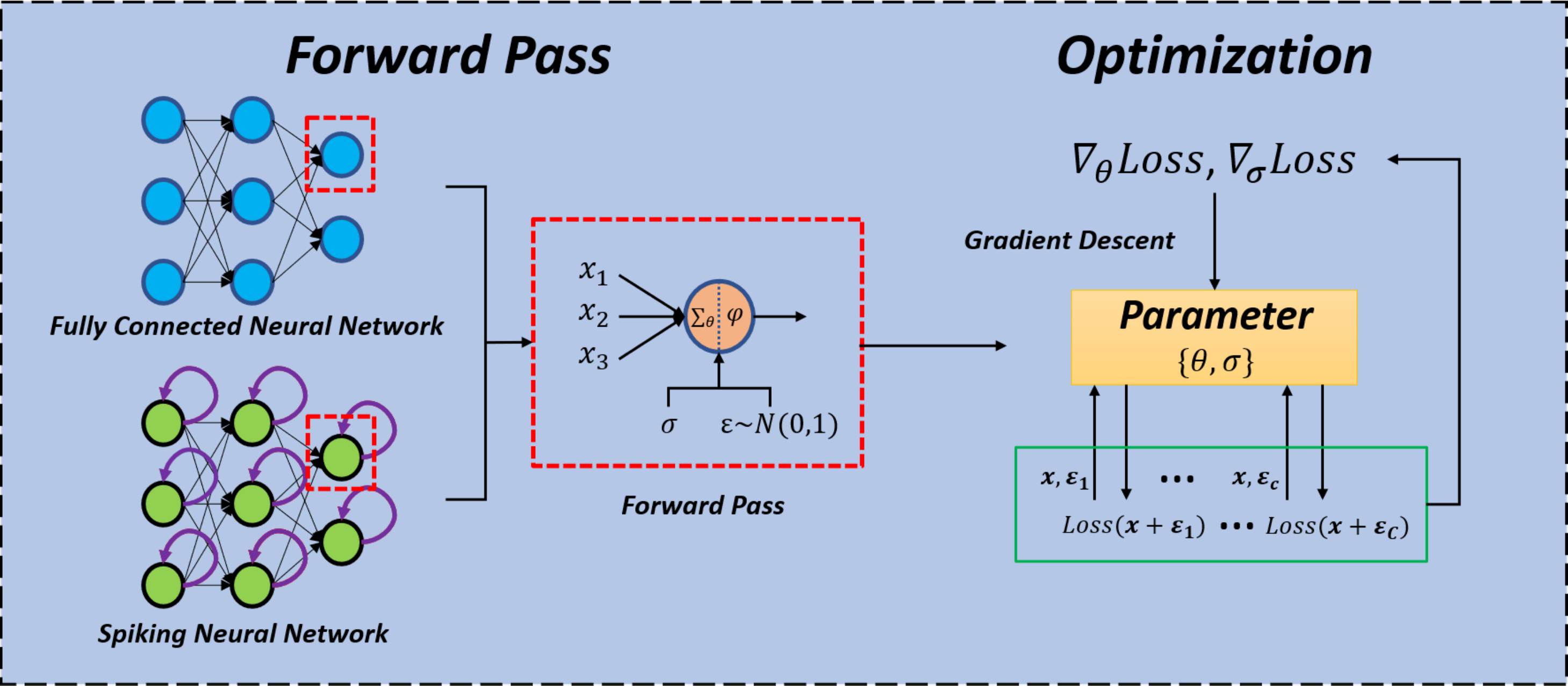}
	\caption{The framework of noise injection-based training method.}
	\label{framework}
\end{figure}

\subsection{The Likelihood Method for Training ANNs}
We denote $L$ as the number of layers in ANNs and $m_{l}$ as the number of neurons in the $l$-th neural layer, $l \in [1, 2, ..., L]$. For the input ${X}^{(0)} \in \mathbb{R}^{m_{0}}$, we have the output of $l$-th layer ${X}^{(l)}=[{x}^{(l)}_{1}, {x}^{(l)}_{2}, ..., {x}^{(l)}_{m_{l}}] \in \mathbb{R}^{m_{l}}$.

Suppose we have $N$ inputs for the network, denoted as ${X}^{(0)}(n)$, $n=1, 2, ..., N$. For the $n$-th input, the $i$-th output at the $l$-th layer can be given by
\begin{align*}
    {x}^{(l+1)}_{i}(n) &= \varphi({v}^{(l)}_{i}(n)),\\ {v}^{(l)}_{i}(n)&=\sum_{j=0}^{m_{l}}\theta^{(l)}_{i,j}{x}^{(l)}_{j}(n)+{\sigma}^{(l)}_{i}{\varepsilon}^{(l)}_{i}(n),
\end{align*}   
where ${x}^{(l)}_{j}(n)$ is the $j$-th input at the  $l$-th layer for the $n$-th data, $\theta^{(l)}_{i,j}$ is the synaptic weight in the $i$-th neuron for the j-th input at the $l$-th layer,  ${v}^{(l)}_{i}$ is the $i$-th logit output at the $l$-th layer, $\varphi$ is the activation function, ${\varepsilon}^{(l)}_{i}(n)$ is an independent random noise following standard normal distribution added to the $i$-th neuron at the $l$-th layer, and $\sigma_i^{(l)}$ is the standard deviation to scale up or down the noise. We let ${x}_0^{(l)}(n)\equiv 1$ and then $\theta^{(l)}_{i,0}$ is the bias term in the linear operation of the $i$-th neuron at the $l$-th layer.  

For the $n$-th input ${X}^{(0)}(n)$ with label ${O}{(n)}=[{o}_{1}(n), {o}_{2}(n), ..., {o}_{m_{L}}(n)] \in \mathbb{R}^{m_{L}}$, we have a loss $\mathcal{L}_n(\theta,\sigma)$. In classification tasks,  the loss function is usually the cross entropy computed by
\begin{align*}
    \mathcal{L}_n(\theta,\sigma) =\mathcal{L}(\theta,\sigma; {X}^{(L)}(n), {O}(n)) = -\sum_{i=1}^{m_{L}}{o}_{i}(n)\log\left( p_{i}({X}^{(L)}(n))\right),
\end{align*}  
where
\begin{align*}
    p_{i}({X}^{(L)}(n))=\frac{\exp{({x}^{(L)}_{i}}(n))}{\sum_{j=1}^{m_{L}}\exp{({x}^{(L)}_{j}(n)})}.
\end{align*}
Training ANNs is to solve the following optimization problem:
\begin{align*}
    \min\limits_{(\theta,\sigma)\in\Theta\times\Sigma}\mathcal{L}(\theta,\sigma):=  \frac{1}{N}\sum_{n=1}^N\mathbb{E}\left[\mathcal{L}_n(\theta,\sigma)\right],
\end{align*}
and a basic approach to solve it is by stochastic gradient descent (SGD) algorithms. Let $\omega=(\theta,\sigma)$ and $\Omega = \Theta\times\Sigma$. The SGD algorithm updates $\omega$ by
\begin{align*}
    \omega_{k+1} = \Pi_{\Omega}(\omega_k-\lambda_kg_k),\quad g_k = \frac{1}{b}\sum_{n\in B_k} g_{k,n},
\end{align*}
where $g_{k,n}$ is an unbiased estimator for the gradient of $\mathbb{E}\left[\mathcal{L}_n(\theta,\sigma)\right]$, $B_k=\{n_k^1,\cdots,n_k^b\}$ is a set of indices in a mini-batch randomly drawn from the $N$ data points, $\lambda_k$ is the learning rate, and $\Pi_{\Omega}$ is the projection onto $\Omega$ that bounds the values of $\omega$ in order to achieve convergence of SGD.

By an LR method, we have an unbiased gradient estimation of $\mathbb{E}\left[\mathcal{L}_n(\theta,\sigma)\right]$ with respect to ANN parameters as follows:
\begin{equation}\label{gradient}
\begin{aligned}
    \frac{\partial \mathbb{E}\left[\mathcal{L}_n(\theta,\sigma)\right]}{\partial \theta_{i,j}^{(l)}}&=\mathbb{E}\left[\mathcal{L}_n(\theta,\sigma){x}^{(l)}_{j}(n)\frac{\varepsilon^{(l)}_{i}(n)}{{\sigma}^{(l)}_{i}}\right],\\
    \frac{\partial \mathbb{E}\left[\mathcal{L}_n(\theta,\sigma)\right]}{\partial {\sigma}^{(l)}_{i}}&=\mathbb{E}\left[\mathcal{L}_n(\theta,\sigma)\frac{1}{{\sigma}^{(l)}_{i}}(\varepsilon^{(l)}_{i}(n)^2-1)\right].
\end{aligned}
\end{equation} 
The detailed derivation is presented in the supplementary.
To reduce variance of the LR estimator, we need to feed each data point $X^{(0)}(n)$ into the ANN multiple times and use the sample mean to estimate the expectation in Eq. (\ref{gradient}):
\begin{align*}
    \frac{1}{C}\sum_{c=1}^C \mathcal{L}_{n,c}(\theta,\sigma){x}^{(l)}_{j}(n,c)\frac{\varepsilon^{(l)}_{i}(n,c)}{{\sigma}^{(l)}_{i}},\text{ and }\frac{1}{C}\sum_{c=1}^C \mathcal{L}_{n,c}(\theta,\sigma)\frac{1}{{\sigma}^{(l)}_{i}}(\varepsilon^{(l)}_{i}(n,c)^2-1),
\end{align*}
where $c$ is the index of $C$ replications for the same input data point, and $x^{(0)}_j(n,c)=x^{(0)}_j(n)$ for all $c$. With more replications, the gradient estimation $g_{k,n}$ will be more accurate.

Tab.~\ref{tab:comp} compares our method with conventional BP algorithm on various aspects, including parallelism, variance, and generalization capability. We elaborate the strength and weakness of our method as follows.
\begin{table}[h]
\centering
\caption{Comparison of features between BP and LR.}
\begin{tabular}{|l|l|l|l|}
\hline
Feature           & BP   & LR  \\ \hline
parallelism     & Low   & High \\ 
Path-independent  & \xmark      & \cmark    \\ 
\begin{tabular}[c]{@{}l@{}}Generalization for \\ activations and loss functions\end{tabular} & \xmark                                  & \cmark                 \\ 
No Gradient Issue & \xmark       & \cmark    \\ 
Computation Complexity                                                                       & $\mathcal{O}(n^2)$  & $\mathcal{O}(n^2)$ \\ 
Variance & Low   & High \\ \hline
\end{tabular}\label{tab:comp}
\end{table}

\textbf{Parallelism:}  
Our method is more efficient for parallel computing than BP and STBP. Conventional BP method rely on the chain rule to propagate the errors backward using matrix multiplication from the loss value to the first neuron layers. The gradient of our method can be estimated without backpropagation by simply computing the element-wise product of the input, the added noise and the loss value. Thus, the training process using Eq. (\ref{gradient}) only involves a forward pass and the computation of the gradient could be paralleled for each neuron, which can fully leverage the computation power of GPUs. 

\textbf{No gradient issues:} 
In BP, the gradient of activation function is multiplied with the residual error for each layer, which might result in vanishing or explosive gradient issue. Our method does not have the issue because it computes the gradient of the parameters within each neuron.

\textbf{Generalization:} 
Our method can apply to any activation function and loss function, and can treat the structure of ANN as a block-box.

However, we need to store $\mathbb{X}^{(l)}(n)$ and $\varepsilon^{(l)}_{i}(n)$, for $l = 1, 2, ..., L$, in the computation of LR. Limited memory hinders the application of the method to complicated examples.  Thus, we further propose an approximation of the LR method for reducing the memory and computation costs.

\subsection{An Approximation to the Likelihood Ratio Method}
\label{sec:simplify_lr}
To reduce the memory cost in gradient estimation, we substitute the term inside the expectation in the first line of Eq. (\ref{gradient}). The original gradient estimation can be presented as 
\begin{align}\label{org_grad}
    g_k = \frac{1}{b}\sum_{n\in B_k}\mathcal{L}_n(\theta_k,\sigma_k)Z_n,
\end{align}
where $Z_n = (Z_n^{\theta},Z_n^{\sigma})$,  $(Z_n^{\theta})_{i,j}^{(l)}={x}^{(l)}_{j}(n)\frac{\varepsilon^{(l)}_{i}(n)}{{\sigma}^{(l)}_{i}}$ and $(Z_n^{\sigma})_{i}^{(l)}=\frac{1}{{\sigma}^{(l)}_{i}}(\varepsilon^{(l)}_{i}(n)^2-1)$. Then we propose the following surrogate ascent direction:
\begin{align}\label{sign_grad}
    \tilde{g}_k = \frac{1}{b}\sum_{n\in B_k}\mathcal{L}_n(\theta_k,\sigma_k)(\text{sign}(Z_n^{\theta}),Z_n^{\sigma}),
\end{align}
where we denote 
\begin{align*}
    \text{sign}({x}) = \begin{cases}
                1,  \qquad &x \geq 0, \\
                -1, \qquad &x \textless 0.
                \end{cases}
\end{align*}
SGD with the surrogate ascent direction can be written as
\begin{align}\label{new_sgd}
    \omega_{k+1} = \Pi_{\Omega}(\omega_k-\lambda_k\tilde{g}_k).
\end{align}

Denote the flattened $Z_n$ and $\omega_k$ as $[z_n^1,\cdots,z_n^D]^{\top}$ and $[\omega_k^1,\cdots,\omega_k^D]^{\top}$, where the first $D_0$ dimensional terms correspond to the coordinates of the $\theta_k$ and $1<D_0<D$, and the rest terms correspond to the coordinates of $\sigma_k$. Similar to LR, $\tilde{g}_k$ is an unbiased estimator of
\begin{equation}\label{approx_gradient}
    \begin{aligned}
    &\mathcal{J}(\omega_k):=\frac{1}{N}\sum_{n=1}^N\mathbb{E}\left[\mathcal{L}_n(\omega_k)(\text{sign}(Z_n^{\theta}),Z_n^{\sigma})\right]\\
    & = \frac{1}{N}\sum_{n=1}^N\left[
    \frac{\partial}{\partial\omega_{k,1}}\mathbb{E}\left[\frac{\mathcal{L}_n(\omega_k)}{\vert Z_{n,1}\vert}\right],\cdots,
    \frac{\partial}{\partial\omega_{k,D_0}}\mathbb{E}\left[\frac{\mathcal{L}_n(\omega_k)}{\vert Z_{n,D_0}\vert}\right],
    \frac{\partial\mathbb{E}\left[\mathcal{L}_n(\omega_k)\right]}{\partial\omega_{k,D_0+1}},\cdots,
    \frac{\partial\mathbb{E}\left[\mathcal{L}_n(\omega_k)\right]}{\partial\omega_{k,D}}\right]^{\top}.
\end{aligned}
\end{equation}

The detailed derivation is presented in the supplementary. 
Due to the existence of extra term $\vert Z_{n,d}\vert^{-1}$ in the first $D_0$ components, the objective function is different from the original one. However, Fig. \ref{fig:viz_opt} shows that the influence of the extra term may be negligible in the early stage of training. To reduce total computational burden, we can update parameters in the direction of $\tilde{g}_k$ in the early stage and then switch to $g_k$.

\begin{figure}[htbp]
\subfigure[Trace of optimization of our method.] 
{
	\begin{minipage}{7cm}
	\centering          
	\includegraphics[scale=0.48]{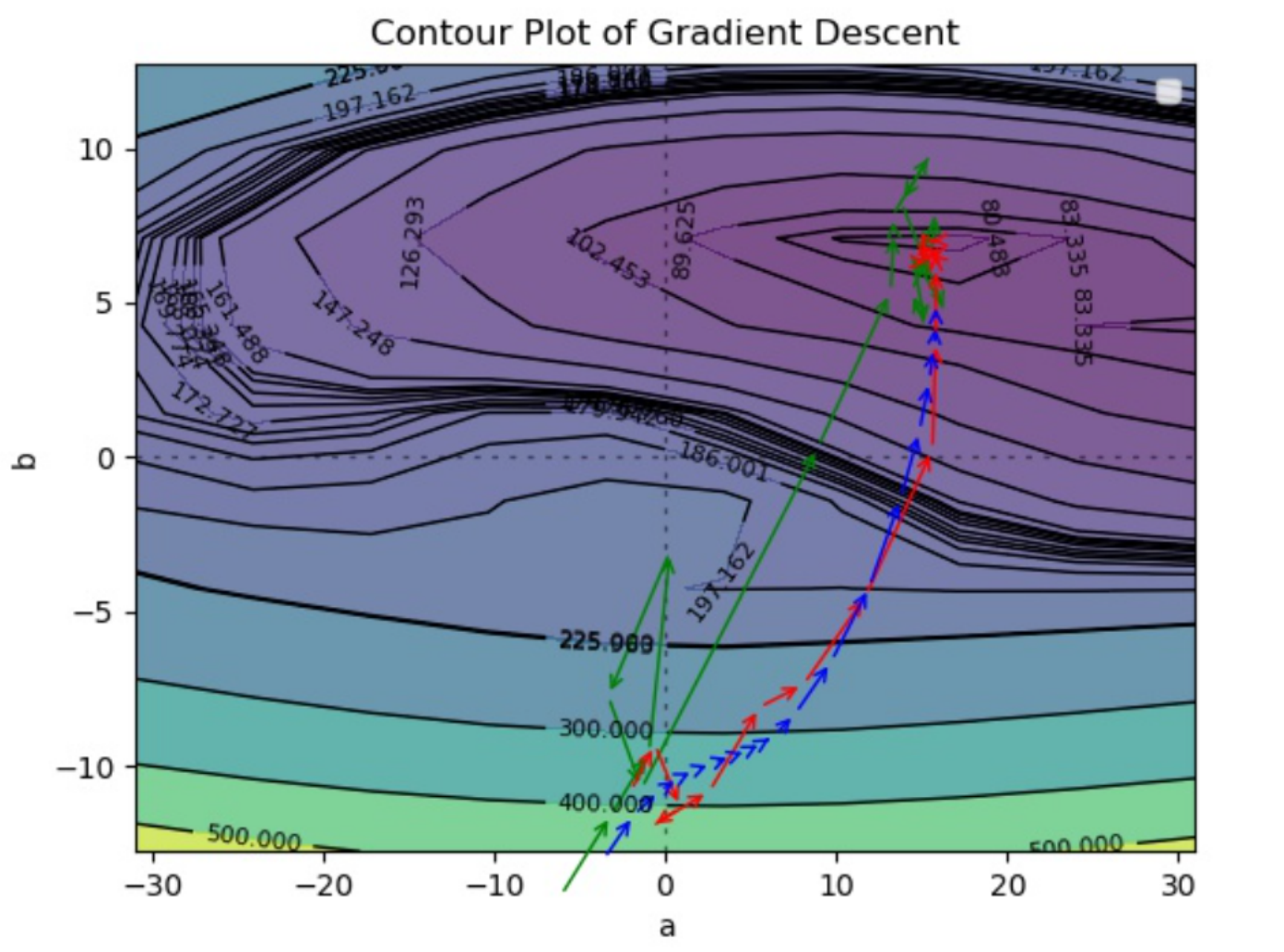}   
	\end{minipage}
}
\subfigure[Trace of optimization of simplified version \\of our method in Section \ref{sec:simplify_lr}.] 
{
    \hspace{-2em}
	\begin{minipage}{7cm}
	\vspace{-0.3em}
	\centering      
	\includegraphics[scale=0.5]{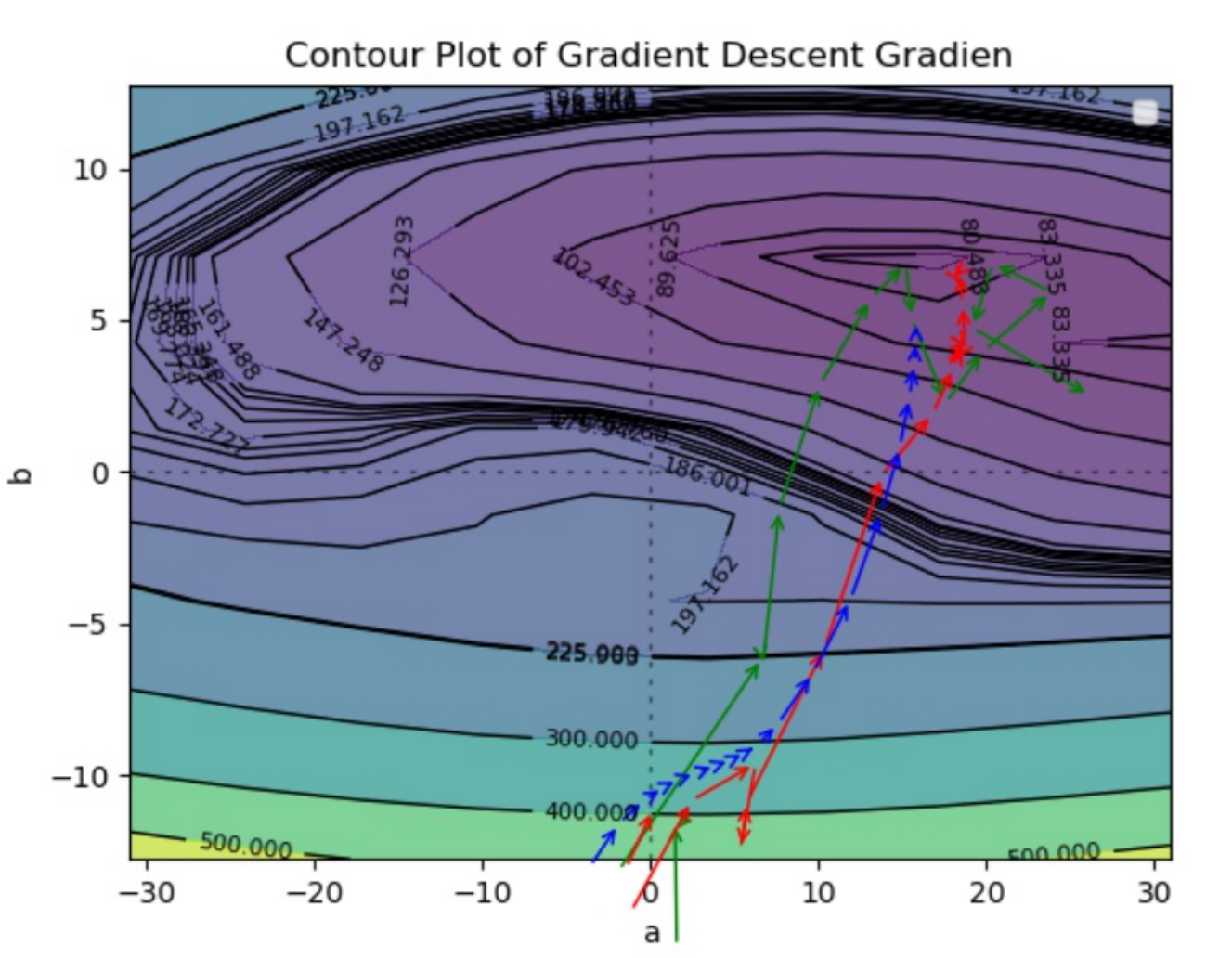}   
	\end{minipage}
}
\caption{Visualization of the optimization process.} 
\label{fig:viz_opt}  
\end{figure}

Trajectories of training schemes using three gradient estimators respectively are presented in the Fig. \ref{fig:viz_opt}. We use in gradient estimator in Eq. (\ref{gradient}) for training ANN in the left figure and use the gradient estimator in Eq. (\ref{sign_grad}) the right figure. The blue line represents the trajectory of training by using BP. 
The green and red lines are trajectories of training by using our methods (\ref{gradient}) and (\ref{sign_grad}) using only $1$ and $10$ copies, respectively. 
All methods get close to the minimum point eventually. The green trajectories in both two figures oscillate wildly, whereas blue trajectories in two figures are close to the red lines.

Now we assume the objective function has a unique equilibrium point $\tilde{\omega}^*\in\Omega$ and discuss the convergence of recursion (\ref{new_sgd}). We define $\mathcal{F}_k=\{\omega_0,\cdots,\omega_k\}$ as the $\sigma$-algebra generated by our algorithm for $k=0,1,\cdots$. Here we introduce some assumptions before the analysis.
\begin{assumption}\label{a1}
The parameter set $\Omega\subset\mathbb{R}^d$ is closed, convex and compact.
\end{assumption}
\begin{assumption}\label{a2}
The cost function $\mathcal{L}^d(\omega_k)$ is continuously differentiable in $\omega_k$, and convex in $\omega_{k,d}$ for all given $\omega_{k,-d}=[\omega_{k,1},\cdots,\omega_{k,d-1},\omega_{k,d+1},\cdots\omega_{k,D}]\in\Omega_{-d}\subset\mathbb{R}^{d-1}$.
\end{assumption}
\begin{assumption}\label{a3}
The step-size sequence $\{\gamma_k\}$ satisfies $\gamma_k>0$, $\sum_{k=0}^{\infty}\gamma_k=\infty$, $\sum_{k=0}^{\infty}\gamma_k^2<\infty$.
\end{assumption}
\begin{assumption}\label{a4}
The loss value is uniformly bounded, i.e., for all $\omega\in\Omega$, $\vert\mathcal{L}_n(\omega)\vert\leq M<\infty$ w.p.1.
\end{assumption}

We expect recursion (\ref{new_sgd}) to track an ODE:
\begin{align}\label{proj_ode}
    \dot{\omega}(t) = \tilde{\Pi}_{\Omega}(\mathcal{J}(\omega(t))),
\end{align}
with $\tilde{\Pi}_{\Omega}(\cdot)$ being a projection function satisfying $\tilde{\Pi}_{\Omega}(\mathcal{J}(\omega(t)))=\mathcal{J}(\omega(t))+p(t)$, where $p(t)\in-C(\omega(t))$ is the vector with the smallest norm needed to keep $\omega(t)$ in $\Omega$, and $C(\omega)$ is the normal cone to $\Omega$ at $\omega$. We first establish the unique global asymptotically stable equilibrium for ODE (\ref{proj_ode}).

\begin{lemma}\label{lemma}
    If Assumptions \ref{a1} and \ref{a2} hold, then $\tilde{\omega}^*$ is the unique global asymptotically stable equilibrium of ODE (\ref{proj_ode}).
\end{lemma}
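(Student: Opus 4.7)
The plan is a Lyapunov-stability argument tailored to the projected ODE (\ref{proj_ode}). I would take $V(\omega) := \tfrac{1}{2}\|\omega - \tilde{\omega}^*\|^2$ as candidate Lyapunov function and aim to show (i) $\tilde{\omega}^*$ is an equilibrium of the flow and (ii) $\dot V(\omega(t)) \leq 0$ along trajectories, with equality only at $\omega(t)=\tilde{\omega}^*$. Compactness of $\Omega$ from Assumption \ref{a1}, together with LaSalle's invariance principle (or the Barbashin-Krasovskii theorem), then yields global asymptotic stability of $\tilde{\omega}^*$.

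For step (i), the hypothesis that $\tilde{\omega}^*$ is the unique minimizer of the objective, combined with the continuous differentiability in Assumption \ref{a2} and convexity of $\Omega$, gives the first-order variational characterization placing $\mathcal{J}(\tilde{\omega}^*)$ in the appropriate (signed) normal cone of $\Omega$ at $\tilde{\omega}^*$, which is exactly the condition $\tilde{\Pi}_{\Omega}(\mathcal{J}(\tilde{\omega}^*))=0$. For step (ii), I would decompose $\tilde{\Pi}_{\Omega}(\mathcal{J}(\omega))=\mathcal{J}(\omega)+p$ with $p\in -C(\omega)$ and compute
\begin{align*}
\dot V(\omega) = (\omega-\tilde{\omega}^*)^{\top}\mathcal{J}(\omega) + (\omega-\tilde{\omega}^*)^{\top} p.
\end{align*}
The boundary term $(\omega-\tilde{\omega}^*)^{\top}p$ is non-positive by the definition of the normal cone applied at $\omega$ to the feasible point $\tilde{\omega}^*$. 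For the interior term, the explicit form (\ref{approx_gradient}) identifies the $d$-th entry of $\mathcal{J}$ as the partial derivative $\partial_{\omega_d}\mathcal{L}^d(\omega)$ of a scalar potential $\mathcal{L}^d$ that Assumption \ref{a2} makes convex in $\omega_d$, so the one-dimensional tangent inequality yields
\begin{align*}
(\omega_d-\tilde{\omega}^*_d)\,\partial_{\omega_d}\mathcal{L}^d(\omega) \;\geq\; \mathcal{L}^d(\omega_d,\omega_{-d})-\mathcal{L}^d(\tilde{\omega}^*_d,\omega_{-d}).
\end{align*}

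Summing over $d$ and telescoping through the sequence of coordinate switches $\omega_{-d}\to \tilde{\omega}^*_{-d}$, using that $\tilde{\omega}^*$ is a \emph{joint} equilibrium so each switch has a definite sign, would give $(\omega-\tilde{\omega}^*)^{\top}\mathcal{J}(\omega)\geq 0$ with strict inequality off $\tilde{\omega}^*$; combined with the boundary estimate this closes step (ii). The main obstacle I anticipate is precisely this telescoping: per-coordinate convexity does not in itself imply joint monotonicity of $\mathcal{J}$ about $\tilde{\omega}^*$, because for $d\leq D_0$ the potential $\mathcal{L}^d(\omega_d,\omega_{-d})=\mathbb{E}[\mathcal{L}_n/|Z_{n,d}|]$ couples coordinates through the normalizer $|Z_{n,d}|$, so one must exploit uniqueness of $\tilde{\omega}^*$ together with the signs of the coordinate-wise deviations to ensure the intermediate contributions accumulate rather than cancel. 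Continuity from Assumption \ref{a2} and compactness from Assumption \ref{a1} are then used to upgrade the off-equilibrium strict inequality into the uniform Lyapunov decrease that LaSalle requires.
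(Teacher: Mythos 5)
Your overall route coincides with the paper's: the Lyapunov function $V(\omega)=\Vert\omega-\tilde{\omega}^*\Vert^2$ (the factor $1/2$ is immaterial), the verification that $\tilde{\omega}^*$ is an equilibrium via the normal-cone characterization of first-order optimality (interior case $\mathcal{J}(\tilde{\omega}^*)=0$ versus boundary case $p=-\mathcal{J}(\tilde{\omega}^*)$), the decomposition of $\dot V$ into a drift term $(\omega-\tilde{\omega}^*)^{\top}\mathcal{J}(\omega)$ and a projection term $(\omega-\tilde{\omega}^*)^{\top}p$, and the observation that the projection term is non-positive by the normal-cone inequality applied to the feasible point $\tilde{\omega}^*$. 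One small caution: with ODE (\ref{proj_ode}) written as $\dot{\omega}=\tilde{\Pi}_{\Omega}(\mathcal{J}(\omega))$ you need $(\omega-\tilde{\omega}^*)^{\top}\mathcal{J}(\omega)\leq 0$ to get $\dot V\leq 0$, whereas your tangent inequality is oriented to produce $\geq 0$; this sign wobble is partly inherited from the paper itself (recursion (\ref{new_sgd}) moves in the direction $-\tilde{g}_k$ while Theorem \ref{theorem1} and the ODE carry a plus sign), but you should fix one convention and carry it through.

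The more substantive point is that the obstacle you flag at the end is a genuine gap, and it sits at the crux of the lemma. Per-coordinate convexity gives only $(\omega_d-\tilde{\omega}^*_d)\,\partial_{\omega_d}\mathcal{L}^d(\omega_d,\omega_{-d})\geq \mathcal{L}^d(\omega_d,\omega_{-d})-\mathcal{L}^d(\tilde{\omega}^*_d,\omega_{-d})$ with the remaining coordinates frozen at the \emph{current} $\omega_{-d}$, and because each coordinate $d$ has its \emph{own} potential $\mathcal{L}^d$ (for $d\leq D_0$ these differ through the normalizer $\vert Z_{n,d}\vert$), the sum over $d$ does not telescope to anything with a definite sign: $\mathcal{J}$ is not the gradient of a single convex function, and componentwise convexity does not imply (pseudo)monotonicity of the concatenated map about $\tilde{\omega}^*$. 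You should know, however, that the paper does not close this gap either: its proof disposes of exactly this step in one line, asserting $(x-\tilde{\omega}^*)^{\top}\mathcal{J}(x)\leq 0$ by Assumption \ref{a2} together with a citation to \cite{facchinei2003finite}, i.e., it imports the required variational-inequality monotonicity property rather than deriving it from Assumptions \ref{a1}--\ref{a2}. So your attempt faithfully reconstructs the paper's argument up to the point where the paper itself appeals to an external result; to genuinely complete the proof one would need an additional structural hypothesis (for instance, pseudomonotonicity of $\mathcal{J}$ at $\tilde{\omega}^*$, or that $\tilde{\omega}^*$ is the unique solution of the associated variational inequality), which is in effect what that citation is being used to assume.
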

\begin{proof}
With Assumptions \ref{a1} and \ref{a2}, if $\tilde{\omega}^*\in\Omega^{\circ}$, then $\mathcal{J}(\tilde{\omega}^*)=0$ and $C(\tilde{\omega}^*)=\{0\}$; if $\tilde{\omega}^*\in\partial\Omega$, then $\mathcal{J}(\tilde{\omega}^*)$ must lie in $C(\tilde{\omega}^*)$, so $p(t)=-\mathcal{J}(\tilde{\omega}^*)$.
By the convexity of $\mathcal{L}^d(\omega_k)$ and the assumption that $\tilde{\omega}^*\in\Omega$ is the unique equilibrium point, $\tilde{\omega}^*\in\Omega$ is the unique equilibrium point of ODE (\ref{proj_ode}).
Take $V(x)=\Vert x-\tilde{\omega}^*\Vert^2$ as the Lyapunov function, and the derivative is $\dot{V}(x)=2(x-\tilde{\omega}^*)^{\top}(\mathcal{J}(x)+p(t))$.
By Assumption \ref{a2} and \cite{facchinei2003finite}, $(x-\tilde{\omega}^*)^{\top}\mathcal{J}(x)\leq0$ for any $x\neq\tilde{\omega}^*$.
Since $p(t)\in C(x)$, we have $(x-\tilde{\omega}^*)^{\top}p(t)\leq0$. Therefore, $\tilde{\omega}^*$ is global asymptotically stable by the Lyapunov Stability Theory \cite{liapounoff2016probleme}.
\end{proof}

To prove that recursion (\ref{new_sgd}) tracks ODE (\ref{proj_ode}), we apply a convergence theorem in \cite{borkar1997stochastic} as below:
\begin{theorem}\label{theorem1}
Consider the recursion
\begin{align*}
    \omega_{k+1}=\Pi_{\Omega}(\omega_k+\gamma_k(\mathcal{J}(\omega_k)+\delta_k)),
\end{align*}
where $\Pi_{\Omega}$ is a projection function, $\mathcal{J}$ is Lipschitz continuous, $\{\gamma_k\}$ satisfies Assumption \ref{a1}, $\{\delta_k\}$ is random variable sequence satisfying $\sum_k\gamma_k\delta_k<\infty$, a.s. If ODE (\ref{proj_ode}) has a unique global asymptotically stable equilibrium $\tilde{\omega}^*$, then the recursion converges to $\tilde{\omega}^*$.
\end{theorem}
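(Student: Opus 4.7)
The plan is to prove Theorem \ref{theorem1} by the ODE method of constrained stochastic approximation: interpolate the discrete iterates into a continuous path, show that this path asymptotically tracks a solution of the projected ODE (\ref{proj_ode}), and then invoke the global asymptotic stability of $\tilde{\omega}^*$ established in Lemma \ref{lemma}. First, I would set $t_0 = 0$, $t_k = \sum_{j=0}^{k-1}\gamma_j$, and define the piecewise-constant interpolation $\bar{\omega}(t) := \omega_k$ for $t \in [t_k, t_{k+1})$; Assumption \ref{a3} gives $t_k \to \infty$ and $\gamma_k \to 0$, while Assumption \ref{a1} together with the projection $\Pi_\Omega$ guarantees that $\{\omega_k\} \subset \Omega$ is uniformly bounded.

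Next I would control the noise. Define $M_k := \sum_{j=0}^{k-1}\gamma_j\delta_j$; the hypothesis $\sum_k \gamma_k \delta_k < \infty$ a.s. means $\{M_k\}$ is Cauchy, so for every $\varepsilon > 0$ there exists $K$ with $\Vert M_{k+m} - M_k\Vert < \varepsilon$ for all $k \geq K$ and $m \geq 0$. Combined with the boundedness of $\{\omega_k\}$, the Lipschitz continuity of $\mathcal{J}$, and the square-summability of $\{\gamma_k\}$, this yields uniform equicontinuity of the shifted family $\{\bar{\omega}(\cdot + s) : s \geq 0\}$ on each finite window $[0,T]$, so Arzel\`a--Ascoli extracts convergent subsequences.

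Then I would identify the limits. Writing a telescoped increment as
\begin{equation*}
    \omega_{k+m} - \omega_k = \sum_{j=k}^{k+m-1} \gamma_j \mathcal{J}(\omega_j) + (M_{k+m} - M_k) + r_{k,m},
\end{equation*}
where $r_{k,m}$ collects the cumulative projection corrections, I would pass to the limit along a subsequence on $[0,T]$: the Riemann sum converges to $\int_0^t \mathcal{J}(\omega(s))\,ds$ by Lipschitzness and the uniform bound, the noise term vanishes by the Cauchy estimate above, and the projection correction produces a process $p(t) \in -C(\omega(t))$ of minimum norm that keeps $\omega(\cdot)$ in $\Omega$. Every cluster point of $\bar{\omega}(\cdot + s)$ therefore satisfies ODE (\ref{proj_ode}). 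Since Lemma \ref{lemma} shows every such trajectory converges to $\tilde{\omega}^*$, the only possible limit point of $\{\omega_k\}$ is $\tilde{\omega}^*$, giving $\omega_k \to \tilde{\omega}^*$ almost surely.

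The main obstacle will be the projection step: verifying rigorously that the accumulated corrections $r_{k,m}$ pass to a limit process taking values in $-C(\omega(t))$ with the minimum-norm property, rather than to some arbitrary bounded-variation perturbation. This requires decomposing each $\Pi_\Omega$ step into its normal-cone displacement, bounding its magnitude by $\mathcal{O}(\gamma_k)$ and controlling its direction using the local geometry of $\partial\Omega$, then verifying weak convergence of the resulting signed measure to the correct cone-valued reflection. The interior case is immediate because $p(t) = 0$ and no projection is active, but the boundary case is where Borkar's original argument does the real work, and I would cite it rather than reconstruct it in full.
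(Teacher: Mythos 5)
The paper does not actually prove Theorem \ref{theorem1}: it imports the statement verbatim as ``a convergence theorem in \cite{borkar1997stochastic}'' and only uses it, so there is no in-paper argument for you to diverge from. Your outline is a faithful reconstruction of the standard Kushner--Clark/Borkar ODE-method proof that lies behind that citation --- timescale interpolation, equicontinuity via boundedness of $\Omega$ and the a.s.\ convergence (hence Cauchy property) of $\sum_k\gamma_k\delta_k$, Arzel\`a--Ascoli, identification of cluster trajectories as solutions of the projected ODE, and the stability hypothesis to conclude --- and you correctly isolate the projection/reflection term as the one genuinely technical step, deferring it to the source just as the paper does. Two minor remarks. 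First, the passage from ``every cluster trajectory of the shifted interpolations solves ODE (\ref{proj_ode})'' to ``$\omega_k\to\tilde{\omega}^*$'' is slightly compressed: the standard argument shows the limit set of the iterates is a (internally chain-transitive) invariant set of the flow, and it is the \emph{global} asymptotic stability of $\tilde{\omega}^*$ that collapses this set to the singleton $\{\tilde{\omega}^*\}$; a finite-window subsequence argument alone does not immediately rule out the iterates drifting along different trajectories. Second, you implicitly (and correctly) read the theorem's condition ``$\{\gamma_k\}$ satisfies Assumption \ref{a1}'' as a typo for Assumption \ref{a3}, since Assumption \ref{a1} concerns the parameter set rather than the step sizes; it is worth flagging that explicitly. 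Neither point is a gap in substance --- your proposal matches the proof the paper delegates to the literature.
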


Next we show that the conditions in Theorem \ref{theorem1} can be verified in Theorem \ref{theorem2}.
\begin{theorem}\label{theorem2}
If Assumptions \ref{a1}, \ref{a2}, \ref{a3} and \ref{a4} hold, then the sequence $\{\omega_k\}$ generated by recursion (\ref{new_sgd}) converges to the unique optimal solution w.p.1.
\end{theorem}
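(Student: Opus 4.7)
The plan is to verify the hypotheses of Theorem~\ref{theorem1} for recursion~(\ref{new_sgd}) and then invoke it directly, using Lemma~\ref{lemma} to supply the required stable equilibrium. To put~(\ref{new_sgd}) in the form of Theorem~\ref{theorem1}, I would write $-\tilde{g}_k = -\mathcal{J}(\omega_k) + \delta_k$ with $\delta_k := \mathcal{J}(\omega_k) - \tilde{g}_k$. The unbiasedness identity already established for~(\ref{approx_gradient}) gives $\mathbb{E}[\tilde{g}_k \mid \mathcal{F}_k] = \mathcal{J}(\omega_k)$, so $\{\delta_k\}$ is a martingale difference sequence adapted to $\{\mathcal{F}_k\}$.

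Second, I would verify the noise condition $\sum_k \gamma_k \delta_k < \infty$ almost surely. Compactness of $\Omega$ (Assumption~\ref{a1}) keeps each $\sigma^{(l)}_i$ bounded away from zero, so the factors $(Z_n^\theta, Z_n^\sigma)$ (which involve $1/\sigma^{(l)}_i$ times either $\mathrm{sign}(\cdot)$, a standard Gaussian, or a Gaussian squared) have finite second moments uniformly in $\omega\in\Omega$. Combined with the uniform loss bound of Assumption~\ref{a4}, this gives $\mathbb{E}[\lVert\delta_k\rVert^2 \mid \mathcal{F}_k] \leq K$ a.s.\ for some constant $K$. Then Assumption~\ref{a3} yields $\sum_k \gamma_k^2 \, \mathbb{E}[\lVert\delta_k\rVert^2] < \infty$, so the partial sums $M_n = \sum_{k=0}^{n-1}\gamma_k \delta_k$ form an $L^2$-bounded martingale that converges a.s.\ by Doob's theorem.

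Third, I would show that $\mathcal{J}$ is Lipschitz on $\Omega$. Using the representation $\mathcal{J}(\omega) = \frac{1}{N}\sum_n \mathbb{E}[\mathcal{L}_n(\omega)(\mathrm{sign}(Z_n^\theta), Z_n^\sigma)]$, I would differentiate under the integral sign by applying dominated convergence: Assumption~\ref{a2} supplies continuous differentiability of $\mathcal{L}_n$ in $\omega$, Assumption~\ref{a4} supplies the integrable dominating bound, and compactness of $\Omega$ makes the bound uniform, giving bounded partial derivatives of $\mathcal{J}$ on $\Omega$. With the Lipschitz property of $\mathcal{J}$, Assumption~\ref{a3} on $\{\gamma_k\}$, the noise summability just verified, and Lemma~\ref{lemma} giving the unique globally asymptotically stable equilibrium $\tilde{\omega}^*$, Theorem~\ref{theorem1} applies and yields $\omega_k\to\tilde{\omega}^*$ a.s.

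The main obstacle is the differentiation step for $\mathcal{J}$: the integrand contains the non-smooth $\mathrm{sign}(Z_n^\theta)$ factor, so a naive interchange of $\partial/\partial\omega$ with expectation cannot land on $\mathrm{sign}$. The route I would take is to exploit the likelihood-ratio identity underlying~(\ref{approx_gradient}) itself, which shifts the derivative onto the Gaussian density of $\varepsilon^{(l)}_i$ at the cost of introducing the singular weight $\lvert Z_{n,d}\rvert^{-1}$; integrability of $1/\lvert\varepsilon\rvert$ against the standard normal, together with $\sigma^{(l)}_i$ bounded away from zero on $\Omega$, keeps the resulting expression finite and, via Assumption~\ref{a4} plus continuity, uniformly bounded on the compact set. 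Care in this identification of the gradient is what makes the Lipschitz bound go through.
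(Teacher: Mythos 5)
Your proposal follows essentially the same route as the paper's proof: the same decomposition $\delta_k=\mathcal{J}(\omega_k)-\tilde{g}_k$, the same $L^2$-bounded-martingale argument via Assumptions \ref{a3} and \ref{a4} and the martingale convergence theorem, and the same final appeal to Theorem \ref{theorem1} together with Lemma \ref{lemma}. You are in fact somewhat more careful than the paper in two spots --- bounding the second moment of $\delta_k$ (which involves the unbounded Gaussian factors in $Z_n^{\sigma}$, so it is not literally dominated by $M$ as the paper's display suggests) and verifying the Lipschitz continuity of $\mathcal{J}$, which the paper leaves implicit.
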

\begin{proof} Recursion (\ref{new_sgd}) can be rewritten as
\begin{align*}
    \omega_{k+1}=\omega_k-\gamma_k\mathcal{J}(\omega_k)+\gamma_k \delta_k,
\end{align*}
where $\delta_k=\mathcal{J}(\omega_k)-\tilde{g}_k$. Let $M_k=\sum_{i=0}^k \gamma_i\delta_i$. Since $\tilde{g}_k$ is the unbiased estimator of $\mathcal{J}(\omega_k)$, $\{M_k\}$ is a martingale sequence. We can verify that it is $L^2$-bounded. With Assumptions \ref{a3} and \ref{a4}, we have
\begin{align*}
    \sum_{i=0}^k\gamma_i^2\delta_i^2\leq M^2\sum_{i=0}^k\gamma_i^2<\infty.
\end{align*}
By noticing $\mathbb{E}[\delta_i|\mathcal{F}_i]=0$, we have 
\begin{align*}
    \mathbb{E}[\gamma_i\delta_i\gamma_j\delta_j]=\mathbb{E}[\gamma_i\delta_i\mathbb{E}[\gamma_j\delta_j|\mathcal{F}_j]]=0,
\end{align*}
for all $i<j$. Thus $\sup_{k\geq0}\mathbb{E}[M_k^2]<\infty$. From the martingale convergence theorem \cite{durrett2019probability}, we have $M_k\rightarrow M_{\infty}$ w.p.1, which implies that $\{M_k\}$ is bounded w.p.1. Now all conditions in Theorem 1 are satisfied. Therefore, it is almost sure that recursion (\ref{new_sgd}) converge to the unique global asymptotically stable equilibrium of ODE (\ref{proj_ode}), which is the equilibrium point $\tilde{\omega}^*$ by the conclusion of Lemma \ref{lemma}.
\end{proof}

We present our proposed method in Alg. \ref{alg}.
\begin{algorithm}
	\renewcommand{\algorithmicrequire}{\textbf{Input:}}
	\renewcommand{\algorithmicensure}{\textbf{Output:}}
	\caption{Noise Injection based Training Scheme}
	\label{alg}
	\begin{algorithmic}
	    \REQUIRE Model parameter $\theta$,  input feature $X$ with target value $O$, the activation function $f(\cdot)$, loss function $\mathcal{L}(\cdot, \cdot)$.
		\STATE Initialize model parameter $\theta$ and set $\sigma$ to 1. 
		\REPEAT
		\STATE $Y$ $\gets$ $\theta X$,
		\STATE Sample the random standard normal noise $Z$ with the same size as $Y$,
		\STATE $Y \gets f(Y + \sigma Z)$,
		\STATE $l \gets \mathcal{L}(Y, O)$,
		\STATE $g \gets$ compute the gradient using original method (\ref{org_grad}) or sign-approximation method (\ref{sign_grad}),
		\STATE update $\theta$ and $\sigma$ using $g$,
		\UNTIL loss value $l$ converges.
		\ENSURE Parameter $\theta$.
	\end{algorithmic}
\end{algorithm}

\subsection{Training for Spiking Neural Networks}
The SNNs are the third generation of neural network models~\cite{MAASS19971659}, which are characteristic of event-driven signal processing~\cite{gerstner2002spiking}. Here we consider the SNNs with Leaky Integrate-and-Fire (LIF) neurons.
We denote $L$ as the number of layers in SNNs and $m_{i}$ as the number of neurons in the $l$-th neural layer,  $l \in [1, 2, ..., L]$. For the time sequence input ${X}^{(t,0)} \in \mathbb{R}^{m_{0}}$, in the $l$-th layer, we have the previous potential ${U}^{(t,l+1)}$ and spike output ${X}^{(t,l+1)}$. 

Suppose we have $N$ inputs for the network, denoted as ${X}^{(t, 0)}(n)$, $n \in [1, 2, ..., N]$, $t \in [1, 2,, ..., T]$. For the $n$-th input, the membrane potential of the $i$-th neuron of the $l$-th layer at $t$-th time stamp  can be given by
\begin{equation}\label{snn_fw}
\begin{aligned}
    u^{(t+1,l+1)}_{i}(n) &= ku^{(t,l+1)}_{i}(n)(1-x^{(t,l+1)}_{i}(n)) + \sum_{j=0}^{m^{l}}\theta^{(l)}_{i.j}x^{(t+1,l)}_{j}(n)+{\sigma}^{(l)}_{i}{\varepsilon}^{(t+1,l)}_{i}(n),\\
    x^{(t+1,l+1)}_{i}(n) &= I(u^{(t+1,l+1)}_{i}(n) - V_{th}),
\end{aligned}
\end{equation}    
where $k$ is the delay factor decided by the membrane time constant, $V_{th}$ is firing threshold of the neuron, and $I$ is the Heaviside neuron activation function. For all time stamps, the potentials are integrated into each neuron. When the current potential passes $V_{th}$, the neuron releases a spike signal to the next layer, and at the same time the membrane potential is reset to zero. All time stamps share the common parameters, including the weight parameters and variances of noises. The spike signal of the last layer at the last time stamp, namely $X^{(T,L)}$, is the final output of SNNs. 
Denote $\tilde{\mathcal{L}}_n(\theta,\sigma)=\mathcal{L}(\theta,\sigma;{X}^{(T, L)}(n),{O}(n))$.
By the LR method, we can compute the estimated gradient for each parameter as follows:
\begin{equation}\label{lr_snn_gradient}
\begin{aligned}
    \frac{\partial \mathbb{E}[\tilde{\mathcal{L}}_n(\theta,\sigma)]}{\partial \theta_{i,j}^{(l)}}&=\mathbb{E}\left[\sum_{t=1}^{T}\tilde{\mathcal{L}}_n(\theta,\sigma) {x}^{(t, l)}_{j}(n)\frac{{\varepsilon}^{(t, l)}_{i}(n)}{\sigma_i^{(l)}}    \right],\\
    \frac{\partial \mathbb{E}[\tilde{\mathcal{L}}_n(\theta,\sigma)]}{\partial \sigma_{i}^{(l)}}&=\mathbb{E}\left[\sum_{t=1}^{T}\tilde{\mathcal{L}}_n(\theta,\sigma)\frac{1}{{\sigma}^{(l)}_{i}}(\varepsilon^{(t,l)}_{i}(n)^2   -1)\right].
\end{aligned}
\end{equation}  
The detailed derivation is presented in the supplementary. The simplification for the gradient estimation using LR method is also applicable in the training process of SNNs. Then the spike signal ${X}^{(t+1, l)}_{i}(n)$ takes values of $\pm1$, and then we only need to compute $\text{sign}\left({\varepsilon}^{(t+1, l)}_{i}(n)\right)$.




\section{Experiment}

\subsection{Settings}
We conduct experiments on MNIST dataset and  Fashion-MNIST dataset. For MNIST, we apply our method to SNN. All the codes are implemented in a computational platform with PyTorch 1.6.0 and Nvidia GeForce RTX 3090. For all experiments, we evaluate classification accuracy.

To test robustness, we adopt several most investigated adversarial attacks, including 1) gradient-based attack: FGSM, BIM, PGD, MIM, which are white-box attacks; 2) optimization-based attack: NAttack which is a black-box attack; 3) input transformation-based attack: DIM and TIM, which are white-box transferable attacks. 

The maximum perturbation on one pixel ranging from $0$ to $1$ is restricted to $0.1$. Specifically, for iterative attack methods including BIM, PGD and MIM, we set the step size as $0.01$ and the maximum number of steps as $15$. For NAttack, we set the number of queries for models as $50$ and the number of samples for gradient estimation as $100$. For DIM and TIM, the momentum is set as $0.9$. We randomly select $1000$ images for adversarial test.

\subsection{Evaluation for SNN on MNIST}
\begin{table}[]
\caption{Evaluation of the adversarial robustness for SNN on MNIST dataset.}
\label{tab:snn_mnist}
\centering
\begin{tabular}{|l|l|l|l|l|l|l|l|l|}
\hline
Method & Ori.          & FGSM & BIM  & PGD  & MIM  & NAttack & DIM  & TIM  \\ \hline
STBP   & 91.8          & 42.6 & 40.2 & 31.5 & 36.0 & 82.0    & 46.8 & 78.2 \\ 
STBP-N & 76.3          & 57.8 & 56.8 & 52.3 & 55.9 & 68.6    & 61.8 & 64.6 \\ 
NIT    & \textbf{92.3} & 81.4 & 84.0 & 82.7 & 83.8 & 85.2    & 83.6 & 84.8 \\ 
NIT-S & 88.4 & \textbf{82.0} & \textbf{84.8} & \textbf{83.6} & \textbf{84.1} & \textbf{85.4} & \textbf{85.0} & \textbf{85.2} \\
\hline
\end{tabular}
\end{table}
We construct a fully connected SNN to classify images in the MNIST dataset. The SNN contains one hidden layer with 50 neurons. The cross-entropy is adopted as the loss function for classification.  We randomly split the entire dataset into training, validation, and testing datasets in a ratio of 7:2:1. 

We report results for four SNN structures trained by three methods: a) STBP; b) STBP-N: use STBP to train SNN with a standard normally distributed noise in each neuron; c) NIT: use our proposed noise injection-based method in Eq. (\ref{gradient}) to train SNN. d) NIT-S: use our proposed approximation method in Eq. (\ref{approx_gradient}) to train SNN.

\begin{table}[]
\caption{Evaluation of the adversarial robustness for SNN on Fashion-MNIST dataset.}
\label{tab:snn_fashion_mnist}
\centering
\begin{tabular}{|l|l|l|l|l|l|l|l|l|}
\hline
Method & Ori.          & FGSM & BIM  & PGD  & MIM  & NAttack & DIM  & TIM  \\ \hline
STBP   & 72.3          & 38.0 & 28.2 & 23.1 & 35.2 & 73.0    & 36.8 & 59.6 \\ 
STBP-N & 34.5          & ~5.0 & ~4.0 & ~4.0 & ~4.0 & 15.0    & ~7.0 & ~5.0 \\ 
NIT    & \textbf{74.6} & \textbf{73.0} & \textbf{72.4} & \textbf{71.8} & \textbf{71.6} & \textbf{73.0}    & \textbf{73.4} & \textbf{73.2} \\ 
NIT-S & 58.7 & 54.4 & 53.6 & 53.2 & 54.1 & 54.0 & 54.4 & 54.4 \\ 
NIT-S+ & 70.6 & 72.1 & 70.8 & 70.8 & 69.5 & 71.9 & 68.3 & 70.5 \\ \hline
\end{tabular}
\end{table}

\textbf{Robustness under white box attack:}
The results are shown in Tab. \ref{tab:snn_mnist}. Compared to SNN trained by STBP, STBP-N can significantly improve robustness against all of the FGSM, BIM, PGD, MIM, NAttack, DIM and TIM attacks, but the accuracy of the original classification task drops significantly. The NIT leads to the best performance in original samples and samples affected by adversarial attacks. Specifically, SNN trained by NIT achieves the optimal classification accuracy of $92.3\%$ in the original samples; moreover, it also achieves a 91.1\% (81.4\% vs 42.6\%) increase in accuracy under the FGSM attack, a 108.9\% (84.0\% vs 40.2\%) increase in accuracy under the BIM attack, a 162.5\% (82.7\% vs 31.5\%) increase in accuracy under the PGD attack, a 132.8\% (83.8\% vs 36.0\%) increase in accuracy under the MIM attack, a 3.9\% (85.2\% vs 82.0\%) increase in accuracy under the NAttack attack, a 78.6\% (83.6\% vs 46.8\%) increase in accuracy under the DIM attack, and 8.4\% (84.8\% vs 78.2\%) increase in accuracy under the TIM attack. 

It is interesting to notice that though NIT-S losses classification accuracy in original samples slightly, it leads to the best results in samples under adversarial attacks and consumes less  memory with much higher computational efficiency. The detailed performance analysis, including the memory consumption, GPU utilization, training and inference time consumption, is supplied in the appendix. 

\subsection{Evaluation for SNN on Fashion-MNIST}


\textbf{Robustness under white box attack:}
The results are shown in Tab.~\ref{tab:snn_fashion_mnist}. The NIT leads to the best performance. Specifically, SNN trained by NIT achieves the optimal classification accuracy of $74.6\%$. It also achieves the highest accuracy under all the adversarial attacks.
The accuracy of NIT and NIT-S is much better than STBP and STBP-S, under adversarial attacks . Notice that compared to NIT, the accuracy of NIT-S drops significantly, which could be explained by the analysis in Section 3.2. As suggested in Section 3.2, we combine NIT and NIT-S, denoted as NIT-S+, to improve performance and computational efficiency simultaneously. 


\subsection{More Discussion on NIT-S}

\begin{figure}[htbp]
\subfigure[The train loss of different methods.] 
{   
    \hspace{-2em}
	\begin{minipage}{7cm}
	\centering          
	\includegraphics[width=\textwidth]{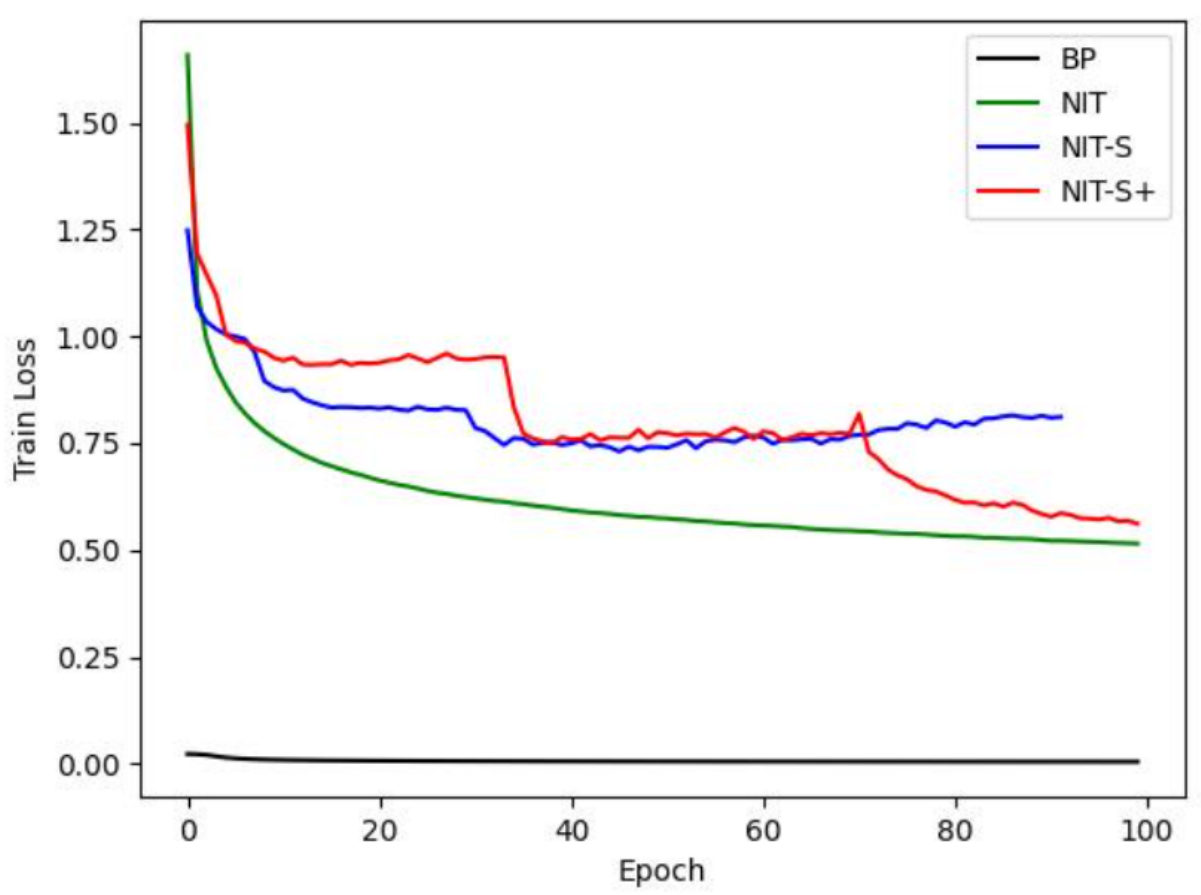}   
	\end{minipage}
}
\subfigure[The testing accuracy of different methods.] 
{
    \hspace{-1em}
	\begin{minipage}{7cm}
	\centering      
	\includegraphics[width=\textwidth]{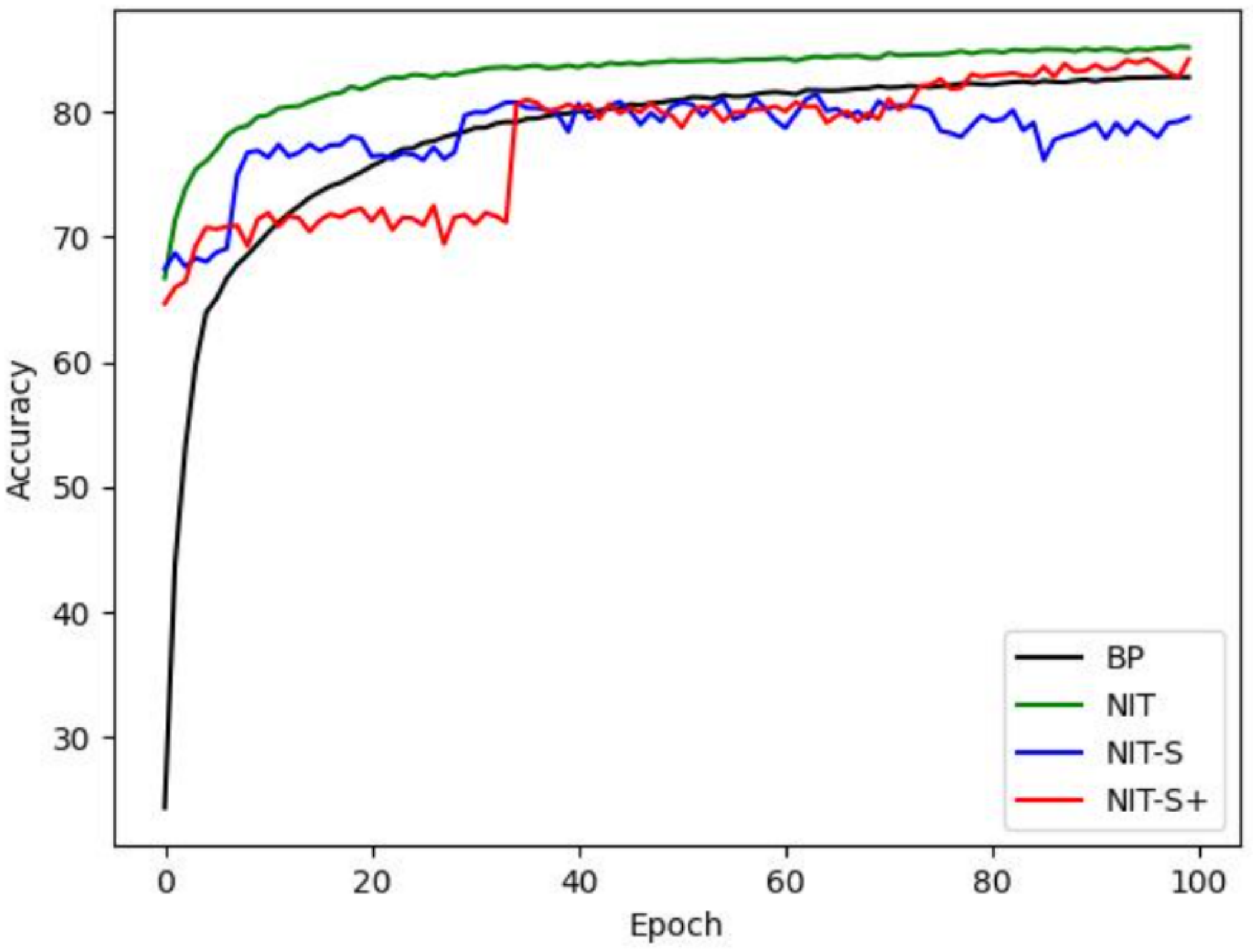}   
	\end{minipage}
}
\caption{The train loss and the testing accuracy.} 
\label{fig:discussion}  
\end{figure}

We use the conventional ANN trained by BP method as the baseline.
Then, we train ANN by NIT-S+, which uses NIT-S for the first $70$ epochs and NIT for the last $30$ epochs. 

As shown in Fig.~\ref{fig:discussion}, the train loss of both NIT-S and NIT-S+ converge to the same level eventually. We note that the accuracy of NIT-S drops after $70$ epochs, whereas NIT-S+ further improves the performance of the model. 
The performance of SNN trained by NIT-S+ on Fashion-MNIST is reported in Tab.~\ref{tab:snn_fashion_mnist}.

\section{Conclusion}

In this work, we propose a novel noise injection-based training method for better robustness. Our method is applied to train SNN in MNIST and Fashion-MNIST datasets. The proposed method significantly improves the performance under various types of adversarial attacks, including gradient-based attack, optimization-based attack, and input transformation-based attack, as well as the accuracy in the original dataset. We also propose a simplified version which applies a sign function on the gradient estimates, which reduces the memory and computation cost. Moreover, the simplified method can be combined together with the originally proposed method to achieves a well-balanced performance on correctness, robustness, and efficiency.

{\small
\bibliographystyle{plainnat}
\bibliography{ref}
}



\end{document}